\documentclass{article}

\usepackage{microtype}
\usepackage{graphicx}
\usepackage{subfigure}
\usepackage{booktabs}

\usepackage{times}
\usepackage{natbib}

\usepackage{algorithm}
\usepackage{algorithmic}

\usepackage{helvet}
\usepackage{courier}
\usepackage{amsfonts,amssymb}
\usepackage{amsmath}
\usepackage[utf8]{inputenc}
\usepackage[T1]{fontenc}
\usepackage{url}
\usepackage{nicefrac}
\usepackage{enumitem}
\usepackage{amsthm}
\usepackage{amssymb}

\newtheorem{theorem}{Theorem}[section]

\newtheorem{proposition}[theorem]{Proposition}

\usepackage{hyperref}

\usepackage[accepted]{icml2018}
\usepackage{comment}
\usepackage{todonotes}
\icmltitlerunning{Towards Binary-Valued Gates for Robust LSTM Training}

\setcounter{dbltopnumber}{3}

\icmlsetsymbol{equal}{*}
\begin{document}
\twocolumn[
\icmltitle{Towards Binary-Valued Gates for Robust LSTM Training}
\begin{icmlauthorlist}
\icmlauthor{Zhuohan Li}{pkumoe}
\icmlauthor{Di He}{pkumoe}
\icmlauthor{Fei Tian}{msr}
\icmlauthor{Wei Chen}{msr}
\icmlauthor{Tao Qin}{msr}
\icmlauthor{Liwei Wang}{pkumoe,pkucds}
\icmlauthor{Tie-Yan Liu}{msr}

\end{icmlauthorlist}

\icmlaffiliation{pkumoe}{Key Laboratory of Machine Perception, MOE, School of EECS, Peking University}
\icmlaffiliation{pkucds}{Center for Data Science, Peking University, Beijing Institute of Big Data Research}
\icmlaffiliation{msr}{Microsoft Research}
\icmlcorrespondingauthor{Tao Qin}{taoqin@microsoft.com}
\icmlkeywords{Recurrent Neural Network, LSTM, Long-Short Term Memory Network, Machine Translation}

\vskip 0.3in
]
\printAffiliationsAndNotice{The work was done while the first author was visiting Microsoft Research Asia.}

\begin{abstract}
Long Short-Term Memory (LSTM) is one of the most widely used recurrent structures in sequence modeling. It aims to use gates to control information flow (e.g., whether to skip some information or not) in the recurrent computations, although its practical implementation based on soft gates only partially achieves this goal. In this paper, we propose a new way for LSTM training, which pushes the output values of the gates towards 0 or 1. By doing so, we can better control the information flow: the gates are mostly open or closed, instead of in a middle state, which makes the results more interpretable. Empirical studies show that (1) Although it seems that we restrict the model capacity, there is no performance drop: we achieve better or comparable performances due to its better generalization ability; (2) The outputs of gates are not sensitive to their inputs: we can easily compress the LSTM unit in multiple ways, e.g., low-rank approximation and low-precision approximation. The compressed models are even better than the baseline models without compression. 
\end{abstract}

\begin{figure*}[htb]
\centering	
\begin{minipage}{0.30\linewidth}
\subfigure[Input gates in LSTM]{
\includegraphics[width = 1\linewidth]{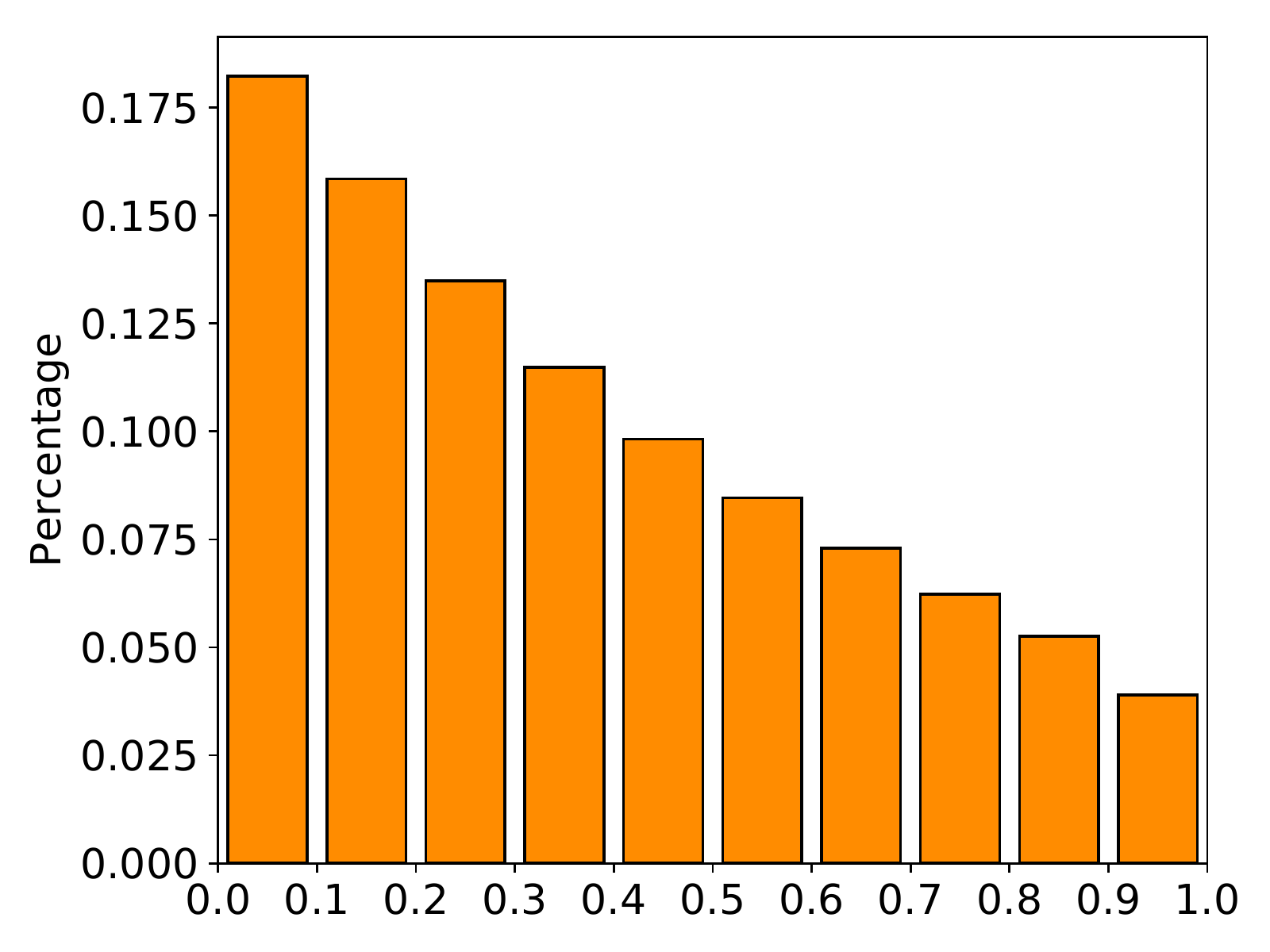}
}
\end{minipage}%
\qquad
\begin{minipage}{0.30\linewidth}
\subfigure[Forget gates in LSTM]{
\includegraphics[width = 1\linewidth]{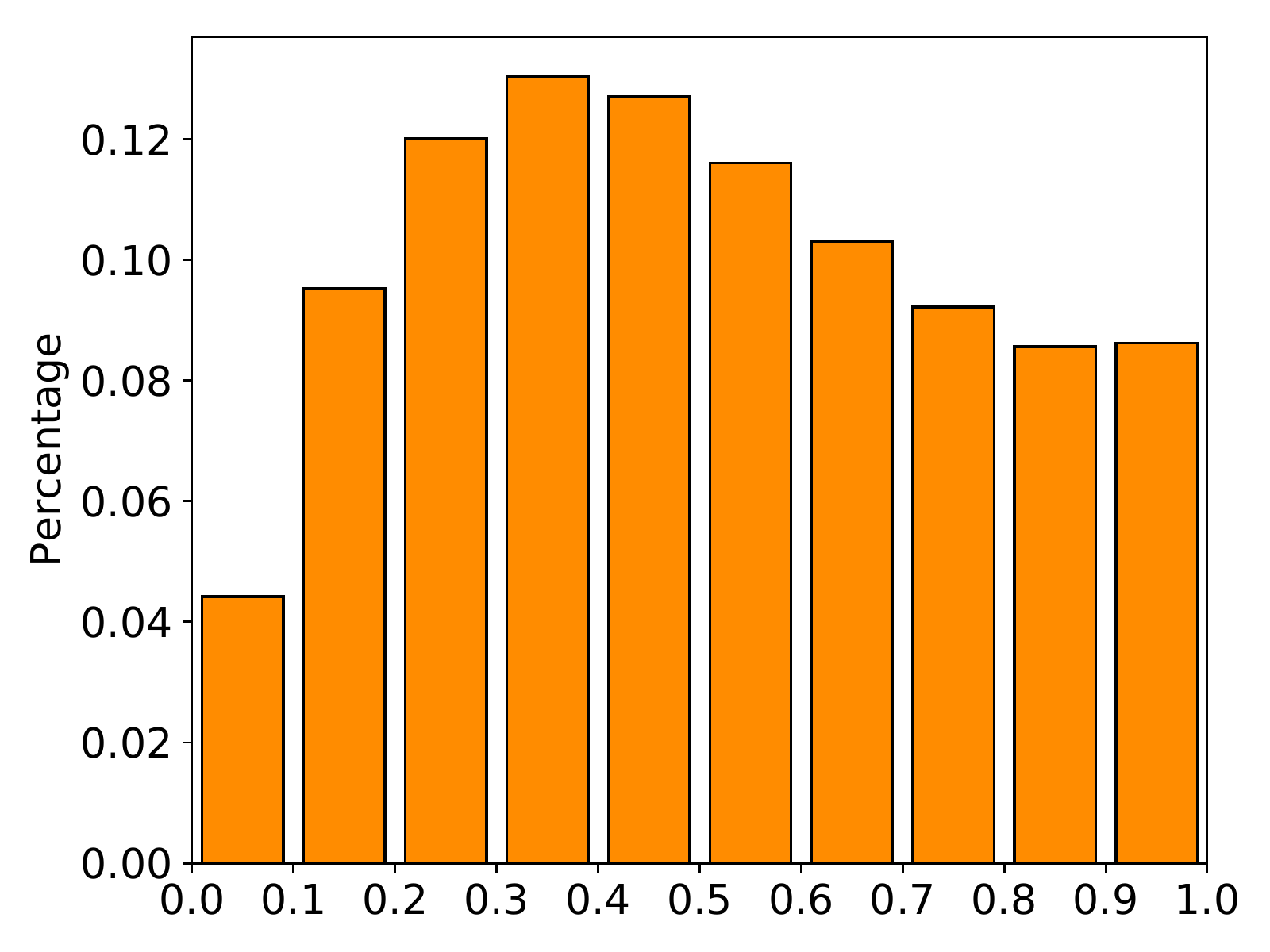}
}
\end{minipage}%
\caption{Histograms of gate value distributions in LSTM, based on the gate outputs of the first-layer LSTM in the decoder from 10000 sentence pairs IWSLT14 German$\rightarrow$English training sets.}
\label{fig:gate_lstm}
\end{figure*}

\section{Introduction}
Recurrent neural networks (RNNs) \citep{hochreiter1998vanishing} are widely used in sequence modeling tasks, such as language modeling \citep{kim2016character,jozefowicz2016exploring}, speech recognition \citep{zhang2016highway}, time series prediction \citep{xingjian2015convolutional}, machine translation \citep{wu2016google,britz2017massive,he2016dual}, image captioning \citep{vinyals2015show,xu2015show}, and image generation \citep{villegas2017learning}.

To address the long-term dependency and gradient vanishing problem of conventional RNNs, long short-term memory (LSTM) \citep{gers1999learning,hochreiter1997long} networks were proposed, which introduce \emph{gate functions} to  control the information flow in a recurrent unit: a \emph{forget gate function} to determine how much previous information should be excluded for the current step, an \emph{input gate function} to find relevant signals to be absorbed into the hidden context, and an \emph{output gate function} for prediction and decision making. For ease of optimization, in practical implementation, one usually uses the element-wise sigmoid function to mimic the gates, whose outputs are soft values between 0 and 1.

By using such gates with many more parameters, LSTM usually performs much better than conventional RNNs. However, when looking deep into the unit, we empirically find that the values of the gates are not that meaningful as the design logic. For example, in Figure \ref{fig:gate_lstm}, the distributions of the forget gate values and input gate values are not sharp and most of the values are in the middle state (around 0.5), meaning that most of the gate values are ambiguous in LSTM. This phenomenon contradicts the design of both gates: to control whether or not to take the information from the previous timesteps or the new inputs. At the same time, several works \citep{murdoch2017automatic, karpathy2015visualizing} show that most cell coordinates of LSTM are hard to find particular meanings.

In this paper, we propose to push the values of the gates to the boundary of their ranges $(0,1)$.\footnote{The output of a gate function is usually a vector. For simplicity, in the paper, we say ``pushing the output of the gate function to 0/1'' when meaning ``pushing each dimension of the output vector of the gate function to either 0 or 1''. We also say that each dimension of the output vector of the gate function is a gate, and say a gate is open/closed if its value is close to 1/0.} Pushing the values of the gates to 0/1 has certain advantages. First, it well aligns with the original purpose of the development of gates: to get the information in or skip by ``opening'' or ``closing'' the gates during the recurrent computation, which reflects more accurate and clear linguistic and structural information.  Second, similar to BitNet in image classification \cite{courbariaux2016binarized}, by pushing the activation function to be binarized, we can learn a model that is ready for further compression. Third, training LSTM towards binary-valued gates enables better generation of the learned model. According to \citep{hochreiter1997flat,haussler1997mutual,keskar2016large,chaudhari2016entropy},  a model lying in a flat region of the loss surface is likely to generalize well, since any small perturbation to the model makes little fluctuation to the loss. Training LSTM towards binary-valued gates means seeking a set of parameters to make the values of the gates approaching zero or one, namely residing in the flat region of the sigmoid function, which corresponds to the flat region of the overall loss surface.

Technically, pushing the outputs of the gates towards such discrete values is challenging. A straightforward approach is to sharpen the sigmoid function by a small temperature. However, this is equivalent to rescaling the input and cannot guarantee the values of the learned gates to be close to 0 or 1. To tackle this challenge, in this paper, we leverage the Gumbel-Softmax estimator developed for variational methods \cite{jang2016categorical,maddison2016concrete}. The estimator generates approximated and differentiable samples for categorical latent variables in a stochastic computational graph, e.g., variational autoencoder. Specifically, during training, we apply the Gumbel-Softmax estimator to the gates to approximate the values sampled from the Bernoulli distribution given by the parameters, and train the LSTM model with standard backpropagation methods. We call the learned model Gumbel-Gate LSTM ($G^2$-LSTM).  We conduct experiments on language modeling and machine translation to verify our proposed method. We have the following observations from experimental results:
\begin{itemize}
\item Our method restricts the gate outputs to be close to the boundary, and thus reduces the representation power. Surprisingly, there is no performance drop. Furthermore, our model achieves better or comparable results compared to the baseline model.
\item Our learned model is easy for further compression. We apply several model compression algorithms to the parameters in the gates, including low-precision approximation and low-rank approximation, and results show that our compressed model can be even better than the baseline model without compression.
\item We investigate a set of samples and find that the gates in our learned model are meaningful and intuitively interpretable. We show our model can automatically learn the boundaries in the sentences.
\end{itemize}
The organization of the paper is as follows. We review related work in Section 2 and propose our learning algorithm in Section 3. Experiments are reported in Section 4 and future work is discussed in the last section.
\section{Background}
\subsection{Gumbel-Softmax Estimator}
\label{app:gumbel}
\citet{jang2016categorical} and \citet{maddison2016concrete} develop a continuous relaxation of discrete random variables in stochastic computational graphs. The main idea of the method is that the multinomial distribution can be represented according to Gumbel-Max trick, thus can be approximated by Gumbel-Softmax distribution. In detail, given a probability distribution over $k$ categories with parameter $\pi_1,\pi_2,\ldots,\pi_k$, the Gumbel-Softmax estimator gives an approximate one-hot sample $y$ with
\begin{eqnarray}
y_i = \frac{\exp((\log\pi_i+q_i)/\tau)}{\sum_{j=1}^k\exp((\log\pi_j+q_j)/\tau)} \quad \text{for }i=1,\ldots,k,
\end{eqnarray}
where $\tau$ is the temperature and $q_i$ is independently sampled from Gumbel distribution: $ q_i=-\log(-\log U_i), U_i\sim \text{Uniform}(0,1). $

By using the Gumbel-Softmax estimator, we can generate sample $y=(y_1,...,y_k)$ to approximate the categorical distribution. Furthermore, as the randomness $q$ is independent of $\pi$ (which is usually defined by a set of parameters), we can use reparameterization trick to optimize the model parameters using standard backpropagation algorithms. Gumbel-Softmax estimator has been adopted in several applications such as variation autoencoder \citep{jang2016categorical}, generative adversarial network \citep{kusner2016gans}, and language generation \citep{subramanian2017adversarial}. To the best of our knowledge, this is the first work to introduce the Gumbel-Softmax estimator in LSTM for robust training purpose.

\subsection{Loss surface and generalization}
The concept of sharp and flat minima has been first discussed in \citep{hochreiter1997flat,haussler1997mutual}. Intuitively, a flat minimum $x$ of a loss $f(\cdot)$ corresponds to the point for which the value of function $f$ varies slowly in a relatively large neighborhood of $x$. In contrast, a sharp minimum $x$ is such that the function $f$ changes rapidly in a small neighborhood of $x$. The sensitivity of the loss function at sharp minima negatively impacts the generalization ability of a trained model on new data. Recently, several papers discuss how to modify the training process and to learn a model in a flat region so as to obtain better generalization ability. \citet{keskar2016large} show by using small-batch training, the learned model is more likely to converge to a flat region rather than a sharp one. \citet{chaudhari2016entropy} propose a new objective function considering the local entropy and push the model to be optimized towards a wide valley.

\section{The Proposed Training Algorithm}
In this section, we present a new and robust training algorithm for LSTM by learning towards binary-valued gates.

\subsection{Long Short-Term Memory RNN}
Recurrent neural networks process an input sequence $\{x_1, x_2, \ldots, x_T\}$ sequentially and construct a corresponding sequence of hidden states/representations $\{h_1,h_2,\ldots,h_T\}$. In single-layer recurrent neural networks, the hidden states $\{h_1,h_2,\ldots,h_T\}$ are used for prediction or decision making. In deep (stacked) recurrent neural networks, the hidden states in layer $k$ are used as inputs to layer $k+1$.

In recurrent neural networks, each hidden state is trained (implicitly) to remember and emphasize task-relevant aspects of the preceding inputs, and to incorporate new inputs via a recurrent operator, $T$, which converts the previous hidden state and present input into a new hidden state, e.g.,
\[
h_t = T(h_{t-1}, x_t) = \tanh(W_hh_{t-1}+ W_xx_t + b),
\]
 where $W_h$, $W_x$ and $b$ are parameters.

Long short-term memory RNN (LSTM) \citep{ hochreiter1997long} is a carefully designed recurrent structure. In addition to the hidden state $h_t$ used as a transient representation of state at timestep $t$, LSTM introduces a memory cell $c_t$, intended for internal long-term storage.
$c_t$ and $h_t$ are computed via three gate functions. The forget gate function $f_t$ directly connects $c_t$ to the memory cell $c_{t-1}$ of the previous timestep via an element-wise multiplication. Large values of the forget gates cause the cell to remember most (if not all) of its previous values. The other gates control the flow of information in input ($i_t$) and output ($o_t$) of the cell. Each gate function has a weight matrix and a bias vector; we use subscripts $f$, $i$ and $o$ to denote parameters for the forget gate function, the input gate function and the output gate function respectively, e.g., the parameters for the forget gate function are denoted by  $W_{xf}, W_{hf},$ and $b_f$.

With the above notations, an LSTM is formally defined as
\begin{eqnarray}
i_t &=& \sigma(W_{xi}x_t + W_{hi}h_{t-1} + b_i),\label{eqn:lstmstart}\\
f_t &=& \sigma(W_{xf}x_t + W_{hf}h_{t-1} + b_f),\\
o_t &=& \sigma(W_{xo}x_t + W_{ho}h_{t-1} + b_o),\\
g_t &=& \tanh(W_{xg}x_t + W_{hg}h_{t-1} + b_g),\\
c_t &=& f_t\odot c_{t-1} + i_t\odot g_t,\\
h_t &=& o_t\odot \tanh(c_t),\label{eqn:lstmend}
\end{eqnarray}
where $\sigma(\cdot)$ represents the sigmoid function and $\odot$ is the element-wise product.

\subsection{Training LSTM Gates Towards Binary Values}
\label{ssec:training}
The LSTM unit requires much more parameters than the simple RNN unit. As we can see from Eqn. (\ref{eqn:lstmstart}) - (\ref{eqn:lstmend}), a large percentage of the parameters are used to compute the gate (sigmoid) functions. If we can push the outputs of the gates to the saturation area of the sigmoid function (i.e., towards 0 or 1), the loss function with respect to the parameters in the gates will be flat: if the parameters in the gates perturb, the change to the output of the gates is small due to the sigmoid operator (see Figure \ref{fig:sigmoid}), and then the change to the loss is little, which means the flat region of the loss. First, as such model is robust to small parameter changes, it is robust to different model compression methods, e.g., low-precision compression or low-rank compression. Second, as discussed in \citep{chaudhari2016entropy}, minima in a flat region is more likely to generalize better, and thus toward binary-valued gates may lead to better test performance.

However, the task of training towards binary-valued gates is quite challenging. One straightforward idea is to sharpen the sigmoid function by using a smaller temperature, i.e., $f_{W,b}(x)= \sigma((Wx+b)/\tau)$, where $\tau<1$ is the temperature. However, it is computationally equivalent to $f_{W^\prime,b^\prime}(x) =\sigma(W^\prime x+b^\prime)$ by setting $W^\prime=W/\tau$ and $b^\prime = b/\tau$. Then using a small temperature is equivalent to rescale the initial parameters as well as the gradients to a larger range. Usually, using an initial point in a large range with a large learning rate will harm the optimization process, and apparently cannot guarantee the outputs to be close to the boundary after training.
\begin{figure}
  \centering
    \includegraphics[width=0.5\textwidth]{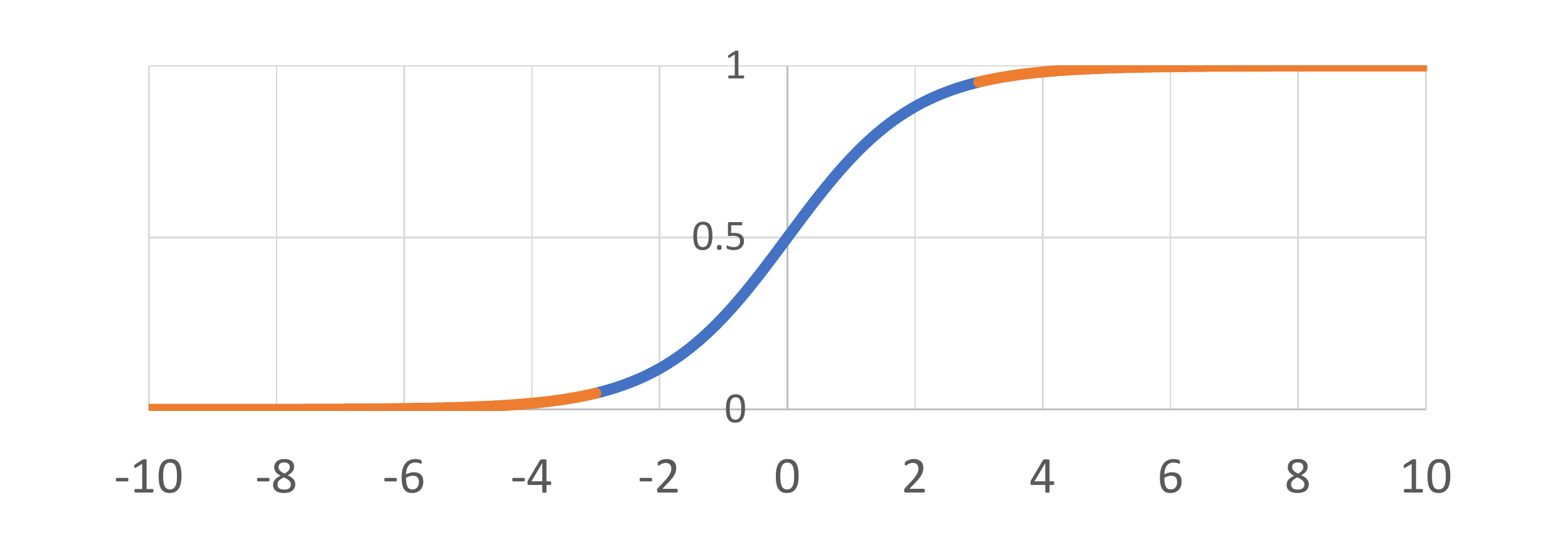}
\caption{
The orange parts correspond to the saturation area of the sigmoid function.
}\label{fig:sigmoid}
\end{figure}

In this work, we leverage the recently developed Gumbel-Softmax trick. This trick is efficient in approximating discrete distributions, and is one of the widely used methods to learn discrete random variables in stochastic computational graphs. We first provide a proposition about the approximation ability of this trick for Bernoulli distribution, which will be used in our proposed algorithm.
\begin{proposition}
	Assume $\sigma(\cdot)$ is the sigmoid function. Given $\alpha\in \mathbb{R}$ and temperature $\tau>0$, we define random variable $D_{\alpha}\sim B(\sigma(\alpha))$ where $B(\sigma(\alpha))$ is the Bernoulli distribution with parameter $ \sigma(\alpha)$, and define  $G(\alpha,\tau)=\sigma\left(\frac{\alpha + \log U-\log(1-U)}{\tau}\right)$ where $U\sim \text{Uniform}(0,1)$. Then the following inequalities hold for arbitrary $ \epsilon\in (0,{1}/{2})$,
\begin{align}
  P(D_{\alpha}=1)-({\tau}/{4})\log({1}/{\epsilon})\le{}& P(G(\alpha,\tau)\ge 1-\epsilon) \nonumber\\
 \le{}& P(D_{\alpha}=1), \label{eqn:prop1}\\
 P(D_{\alpha}=0)-({\tau}/{4})\log({1}/{\epsilon})\le{}&P(G(\alpha,\tau) \le \epsilon)\nonumber\\
 \le{}&  P(D_{\alpha}=0). \label{eqn:prop2}
\end{align}
\end{proposition}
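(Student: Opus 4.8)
The plan is to strip away the Gumbel noise by a reparameterization so that the whole statement reduces to evaluating the CDF of a single logistic random variable, and then to control the resulting shifted-sigmoid expression with the mean value theorem and the elementary bound $\sigma'(\cdot)\le 1/4$.

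First I would observe that $L:=\log U-\log(1-U)$, the ``logit noise'' inside $G(\alpha,\tau)$, is a logistic random variable whose CDF is exactly the sigmoid: for any $t\in\mathbb{R}$, $P(L\le t)=P\bigl(U\le\sigma(t)\bigr)=\sigma(t)$ directly from $U\sim\text{Uniform}(0,1)$ (equivalently, $L$ is the difference of two i.i.d.\ Gumbel variables, tying back to the Gumbel--Max construction of Section~\ref{app:gumbel}). Since $G(\alpha,\tau)=\sigma\bigl((\alpha+L)/\tau\bigr)$ and $\sigma$ is strictly increasing, I would rewrite the event $\{G(\alpha,\tau)\ge 1-\epsilon\}$ as $\{L\ge \tau\,\sigma^{-1}(1-\epsilon)-\alpha\}$, use $\sigma^{-1}(1-\epsilon)=\log\frac{1-\epsilon}{\epsilon}$ together with the CDF above, and conclude
\[
P\bigl(G(\alpha,\tau)\ge 1-\epsilon\bigr)=\sigma\!\left(\alpha-\tau\log\tfrac{1-\epsilon}{\epsilon}\right),
\]
to be compared against $P(D_\alpha=1)=\sigma(\alpha)$.

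Next, the two inequalities in \eqref{eqn:prop1} become elementary facts about $\sigma$. For the upper bound I would note that $\epsilon<1/2$ forces $\log\frac{1-\epsilon}{\epsilon}\ge 0$, so $\alpha-\tau\log\frac{1-\epsilon}{\epsilon}\le\alpha$ and monotonicity of $\sigma$ gives $\le\sigma(\alpha)$. For the lower bound I would apply the mean value theorem to $\sigma$ on the interval with endpoints $\alpha-\tau\log\frac{1-\epsilon}{\epsilon}$ and $\alpha$, and use $\sigma'=\sigma(1-\sigma)\le 1/4$:
\[
\sigma(\alpha)-\sigma\!\left(\alpha-\tau\log\tfrac{1-\epsilon}{\epsilon}\right)\le \tfrac{\tau}{4}\log\tfrac{1-\epsilon}{\epsilon}\le \tfrac{\tau}{4}\log\tfrac{1}{\epsilon},
\]
the last step using $\frac{1-\epsilon}{\epsilon}\le\frac1\epsilon$. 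Rearranging yields the claimed lower bound. Then \eqref{eqn:prop2} follows by the identical argument after observing $G(\alpha,\tau)\le\epsilon\iff L\le\tau\sigma^{-1}(\epsilon)-\alpha$ with $\sigma^{-1}(\epsilon)=-\log\frac{1-\epsilon}{\epsilon}$, so that $P\bigl(G(\alpha,\tau)\le\epsilon\bigr)=\sigma\!\left(-\alpha-\tau\log\tfrac{1-\epsilon}{\epsilon}\right)$ and $P(D_\alpha=0)=\sigma(-\alpha)$ — i.e.\ the first claim with $\alpha$ replaced by $-\alpha$.

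I do not expect a genuine obstacle. The only real ``insight'' step is recognizing that the reparameterization noise is logistic, so that its CDF cancels against the sigmoid nonlinearity and $P(G\ge 1-\epsilon)$ collapses to a clean shifted sigmoid; everything after that is bookkeeping, with the upper bounds coming from monotonicity and the lower bounds from a one-line $\tfrac14$-Lipschitz estimate on $\sigma$. The only point demanding slight care is reconciling the logarithm $\log\frac{1-\epsilon}{\epsilon}$ that arises naturally with the $\log\frac1\epsilon$ written in the statement, but since $\log\frac{1-\epsilon}{\epsilon}\le\log\frac1\epsilon$ this replacement only weakens the displayed bound and so is harmless.
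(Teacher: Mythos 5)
Your proposal is correct and follows essentially the same route as the paper: both reduce $P(G(\alpha,\tau)\ge 1-\epsilon)$ to the closed form $\sigma\!\left(\alpha-\tau\log(1/\epsilon-1)\right)$ (you via the logistic CDF of $\log U-\log(1-U)$, the paper by directly unwinding the inequality in $U$, which is the same computation), and both then conclude with monotonicity plus the $\tfrac14$-Lipschitz bound on $\sigma$ and $\log\frac{1-\epsilon}{\epsilon}\le\log\frac1\epsilon$. Your explicit symmetry argument for the second inequality is a slightly tidier version of the step the paper omits as ``almost identical.''
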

\begin{proof}
Since $\sigma^{-1}(x)=\log\left(\frac{x}{1-x}\right)$, we have 
\begin{align*}
&P(G(\alpha,\tau)\ge 1-\epsilon) \\
={}&P\left(\frac{\alpha + \log U-\log(1-U)}{\tau}\ge \log(1/{\epsilon}-1)\right) \\
={}&P(e^{\alpha -\tau\log({1}/{\epsilon}-1)} \ge {(1-U)}/{U}) \\
={}&P\left(U\ge \frac{1}{1+e^{\alpha -\tau\log({1}/{\epsilon}-1)}}\right) \\
={}&\sigma(\alpha -\tau\log({1}/{\epsilon}-1)).
\end{align*}
Considering that sigmoid function is $({1}/{4})$-Lipschitz continuous and morotonically increasing, we have 
\begin{align*}
&P(D_{\alpha}=1)-P(G(\alpha,\tau)\ge 1-\epsilon) \\
={}& \sigma(\alpha)-\sigma(\alpha -\tau\log({1}/{\epsilon}-1)) \\
\le{}& ({\tau}/{4})\log({1}/{\epsilon}-1) \le  ({\tau}/{4})\log({1}/{\epsilon})
\end{align*}
and $ P(D_{\alpha}=1)-P(G(\alpha,\tau)\ge 1-\epsilon)\ge 0$. We omit the proof for Eqn.~(\ref{eqn:prop2}) as it is almost identical to the proof of Eqn.~(\ref{eqn:prop1}).
\end{proof}
We can see from the above proposition, the distribution of $G(\alpha,\tau)$ can be considered as an approximation of Bernoulli distribution $B(\sigma(\alpha))$. The rate of convergence is characterized by Eqn.~(\ref{eqn:prop1}) and (\ref{eqn:prop2}). When the temperature $\tau$ approaches positive zero, we directly obtain the following property, which is also proved by \citet{maddison2016concrete},
\begin{eqnarray}
P\left(\lim_{\tau\rightarrow 0^+}G(\alpha,\tau)=1\right)&=& P(D_{\alpha}=1), \nonumber\\
P\left(\lim_{\tau\rightarrow 0^+}G(\alpha,\tau)=0\right)&=& P(D_{\alpha}=0).
\end{eqnarray}
We apply this method into the computation of the gates. Imagine a one-dimensional gate $\sigma(\alpha(\theta))$ where $\alpha$ is a scalar parameterized by $\theta$, and assume the model will produce a larger loss if the output of the gate is close to one, and produce a smaller loss if the gate value is close to zero. If we can repeatedly sample the output of the gate using $G(\alpha(\theta),\tau)=\sigma\left(\frac{\alpha(\theta) + \log U-\log(1-U)}{\tau}\right)$ and estimate the loss, any gradient-based algorithm will push the parameter $\theta$ such that the output value of the gate is close to zero in order to minimize the expected loss. By this way, we can optimize towards the binary-valued gates.

As the gate function is usually a vector-valued function, we extend the notations into a general form: Given $\alpha\in \mathbb{R}^d$ and $\tau>0$, we define $G(\alpha,\tau)=\sigma\left(\frac{\alpha + \log U-\log(1-U)}{\tau}\right)$, where $U$ is a vector and each element $u_i$ in $U$ is independently sampled from $\text{Uniform}(0,1)$, $i=1,2,\ldots,d$. 

In particular, we only push the outputs of input gates and forget gates towards binary values as the output gates usually need fine-granularity information for decision making which makes binary values less desirable. To justify this, we conducted similar experiments and observed a performance drop when pushing the output gates to 0/1 together with the input gates and the forget gates.

We call our proposed learning method Gumbel-Gate LSTM ($G^2$-LSTM), which works as follows during training:
\begin{eqnarray}
i_t &=& G (W_{xi}x_t + W_{hi}h_{t-1} + b_i, \tau) \label{eqn:g2lstmstart}\\
f_t &=& G (W_{xf}x_t + W_{hf}h_{t-1} + b_f, \tau)\\
o_t &=& \sigma(W_{xo}x_t + W_{ho}h_{t-1} + b_o)\\
g_t &=& \tanh(W_{xg}x_t + W_{hg}h_{t-1} + b_g)\\
c_t &=& f_t\odot c_{t-1} + i_t\odot g_t\\
h_t &=& o_t\odot \tanh(c_t). \label{eqn:g2lstmend}
\end{eqnarray}
In the forward pass, we first independently sample values for $U$ in each time step, then update $G^2$-LSTMs using Eqn. (\ref{eqn:g2lstmstart}) - (\ref{eqn:g2lstmend}) and calculate the loss, e.g., negative log likelihood loss. In the backward pass, as $G$ is continuous and differentiable with respect to the parameters and the loss is continuous and differentiable with respect to $G$, we can use any standard gradient-based method to update the model parameters.

\begin{table*}[tb]
\centering
\caption{Performance comparison on language model (perplexity)}\label{exp:performancelm}
\centerline{
\begin{tabular}{|l|l|l|l|}
\hline
Model                                                               & Size      & Valid         & Test           \\\hline\hline
\multicolumn{4}{|l|}{\emph{Existing results} }\\\hline
Unregularzed LSTM                                                   & 7M        & 120.7         & 114.5          \\
NR-dropout                          \citep{zaremba2014recurrent}    & 66M       & 82.2          & 78.4           \\
Zoneout                             \citep{krueger2016zoneout}      & 66M       & -             & 77.4           \\
Variational LSTM                    \citep{gal2016theoretically}    & 19M       & -             & 73.4           \\
CharCNN                             \citep{kim2016character}        & 21M       & 72.4          & 78.9           \\
Pointer Sentinel-LSTM               \citep{merity2016pointer}       & 51M       & -             & 70.9           \\
LSTM + continuous cache pointer     \citep{grave2016improving}      & -         & -             & 72.1           \\
Variational LSTM + augmented loss   \citep{inan2016tying}           & 51M       & 71.1          & 68.5           \\
Variational RHN                     \citep{zilly2016recurrent}      & 23M       & 67.9          & 65.4           \\
NAS Cell                            \citep{zoph2016neural}          & 54M       & -             & 62.4           \\
4-layer skip connection LSTM        \citep{melis2017state}          & 24M       & 60.9          & 58.3           \\
AWD-LSTM w/o finetune                \citep{merity2017regularizing}  & 24M       & 60.7          & 58.8           \\
AWD-LSTM (Baseline)                           \citep{merity2017regularizing}  & 24M       & 60.0          & 57.3           \\\hline\hline
\multicolumn{4}{|l|}{\emph{Our system} }\\\hline
Sharpened Sigmoid AWD-LSTM w/o finetune                               & 24M       & 61.6          & 59.4           \\
Sharpened Sigmoid AWD-LSTM                                            & 24M       & 59.9          & 57.5           \\
$G^2$-LSTM w/o finetune                                             & 24M       & \textbf{60.4} & \textbf{58.2}  \\
$G^2$-LSTM                                                          & 24M       & \textbf{58.5} & \textbf{56.1}  \\\hline\hline
\multicolumn{4}{|l|}{\emph{+continuous cache pointer} }\\\hline
AWD-LSTM + continuous cache pointer \citep{merity2017regularizing}  & 24M       & 53.9          & 52.8           \\
Sharpened Sigmoid AWD-LSTM + continuous cache pointer                 & 24M       & 53.9          & 53.2           \\
$G^2$-LSTM + continuous cache pointer                               & 24M       & \textbf{52.9} & \textbf{52.1}  \\\hline
\end{tabular}
}
\end{table*}

\section{Experiments}
\subsection{Settings}
We tested the proposed training algorithm on two tasks -- language modeling and machine translation.\footnote{Codes for the experiments are available at \url{https://github.com/zhuohan123/g2-lstm}}
\subsubsection{Language Modeling}
Language modeling is a very basic task for LSTM. We used the Penn Treebank corpus that contains about 1 million words. The task is to train an LSTM model to correctly predict the next word conditioned on previous words. A model is evaluated by the prediction perplexity: smaller the perplexity, better the prediction.

We followed the practice in \citep{merity2017regularizing} to set up the model architecture for LSTM: a stacked three-layer LSTM with drop-connect \citep{wan2013regularization} on recurrent weights and a variant of averaged stochastic gradient descent (ASGD) \citep{polyak1992acceleration} for optimization, with a 500-epoch training phase and a 500-epoch finetune phase. Our training code for $G^2$-LSTM was also based on the code released by \citet{merity2017regularizing}. Since the temperature $\tau$ in $G^2$-LSTM does not have significant effects on the results, we set it to 0.9 and followed all other configurations in \citet{merity2017regularizing}. We added neural cache model \citep{grave2016improving} on the top of our trained language model to further improve the perplexity.

\subsubsection{Machine Translation}

\begin{table*}[tb]
\centering
\caption{Performance comparison on machine translation (BLEU)}\label{exp:performance}
\begin{tabular}{|l|l||l|l|}
\hline
English$\rightarrow$German task& BLEU  & German$\rightarrow$English task& BLEU\\\hline\hline
\multicolumn{4}{|l|}{\emph{Existing end-to-end system} }\\\hline
RNNSearch-LV \citep{chousing} &  19.40 & BSO \citep{wiseman2016sequence} & 26.36 \\
MRT \citep{shen2015minimum} &  20.45 & NMPT \citep{huangtoward} &  28.96 \\
Global-att \citep{luong2015effective} &20.90 & NMPT+LM \citep{huangtoward} & 29.16 \\
GNMT \citep{wu2016google}& \textbf{24.61} &ActorCritic \citep{AC4SequencePrediction} &28.53\\\hline\hline
\multicolumn{4}{|l|}{\emph{Our end-to-end system} }\\\hline
Baseline  &  21.89 &-  &  31.00\\
Sharpened Sigmoid  &  21.64 &-  &  29.73 \\
$G^2$-LSTM &22.43 &-  &  \textbf{31.95}\\\hline
\end{tabular}
\end{table*}

We used two datasets for experiments on neural machine translation (NMT): (1) IWSLT’14 German$\rightarrow$English translation dataset~\citep{cettolo2014report}, which is widely adopted in machine learning community~\citep{AC4SequencePrediction,BSO,PG4Sequence}. The training/validation/test sets contain about 153K/7K/7K sentence pairs respectively, with words pre-processed into sub-word units using byte pair encoding (BPE)~\citep{BPE}. We chose 25K most frequent sub-word units as the vocabulary for both German and English. (2) English$\rightarrow$German translation dataset in WMT'14, which is also commonly used as a benchmark task to evaluate different NMT models \citep{bahdanau2014neural, wu2016google, gehring2017convolutional,he2017decoding}. The training set contains 4.5M English$\rightarrow$German sentence pairs, Newstest2014 is used as the test set, and the concatenation of Newstest2012 and Newstest2013 is used as the validation set. Similarly, BPE was used to form a vocabulary of most frequent 30K sub-word units for both languages. In both datasets, we removed the sentences with more than 64 sub-word units in training.

For the German$\rightarrow$English dataset, we adopted a stacked two-layer encoder-decoder framework. We set the size of word embedding and hidden state to 256. As the amount of data in the English$\rightarrow$German dataset is much larger, we adopted a stacked three-layer encoder-decoder framework and set the size of word embedding and hidden state to 512 and 1024 respectively. The first layer of the encoder was bi-directional. We also used dropout in training stacked LSTM as in \citep{zaremba2014recurrent}, with dropout value determined via validation set performance. For both experiments, we set the temperature $\tau$ for $G^2$-LSTM to 0.9, the same as language modeling task. The mini-batch size was 32/64 for German$\rightarrow$English/English$\rightarrow$German respectively. All models were trained with AdaDelta \citep{zeiler2012adadelta} on one M40 GPU. Both gradient clipping norms were set to 2.0. We used tokenized case-insensitive and case-sensitive BLEU as evaluation measure for German$\rightarrow$English/English$\rightarrow$German respectively, following common practice.\footnote{Calculated by the script at \url{https://github.com/moses-smt/mosesdecoder/blob/master/scripts/generic/multi-bleu.perl}} The beam size is set to 5 during the inference step.

\begin{table*}[htb]
\centering
\caption{Model compression results on Penn Tree Bank dataset}\label{exp:robustlm}
\centerline{
\begin{tabular}{|c|c|c|c|c|c|}
\hline
& Original & Round & Round \& clip & SVD ($\mathit{rank}=128$)& SVD ($\mathit{rank}=64$)\\\hline\hline
Baseline &52.8 & 53.2 (+0.4) & 53.6 (+0.8) &56.6 (+3.8) & 65.5 (+12.7)  \\\hline
Sharpened Sigmoid &53.2  & 53.5 (+0.3) & 53.6 (\textbf{+0.4})  &54.6 (+1.4)& 60.0 (+6.8)\\\hline
$G^2$-LSTM &\textbf{52.1} &\textbf{52.2} (\textbf{+0.1})&\textbf{52.8} (+0.7) &\textbf{53.3} (\textbf{+1.2}) & \textbf{56.0} (\textbf{+3.9}) \\\hline
\end{tabular}
}
\end{table*}

\begin{table*}[htb]
\centering
\caption{Model compression results on IWSLT German$\rightarrow$English dataset}\label{exp:robust}
\centerline{
\begin{tabular}{|c|c|c|c|c|c|}
\hline
& Original & Round & Round \& clip  & SVD ($\mathit{rank}=32$)& SVD ($\mathit{rank}=16$)\\\hline\hline
Baseline &31.00 &28.65 (-2.35) & 21.97 (-9.03)  &30.52 (-0.48)& 29.56 (-1.44)\\\hline
Sharpened Sigmoid &29.73 &27.08 (-2.65) & 25.14 (-4.59)  &29.17 (-0.53)& 28.82 (-0.91)\\\hline
$G^2$-LSTM &\textbf{31.95} &\textbf{31.44} (\textbf{-0.51})& \textbf{31.44} (\textbf{-0.51})  &\textbf{31.62} (\textbf{-0.33})&\textbf{31.28} (\textbf{-0.67}) \\\hline
\end{tabular}
}
\end{table*}
\begin{table*}[htb]
\centering
\caption{ Model compression results on WMT English$\rightarrow$German dataset}\label{exp:robust2}
\centerline{
\begin{tabular}{|c|c|c|c|c|c|}
\hline
& Original & Round & Round \& clip     & SVD ($\mathit{rank}=32$)& SVD ($\mathit{rank}=16$)\\\hline\hline
Baseline &21.89 &16.22 (-5.67) & 16.03 (-5.86)  &21.15 (-0.74)&19.99 (-1.90)\\\hline
Sharpened Sigmoid &21.64 &16.85 (-4.79) & 16.72 (-4.92)  &20.98 (-0.66)&19.87 (-1.77)\\\hline
$G^2$-LSTM &\textbf{22.43}&\textbf{20.15} (\textbf{-2.28}) & \textbf{20.29} (\textbf{-2.14})&\textbf{22.16} (\textbf{-0.27})&\textbf{21.84} (\textbf{-0.51})\\\hline
\end{tabular}
}
\end{table*}

\subsection{Experimental Results}
The experimental results are shown in Table~\ref{exp:performancelm} and \ref{exp:performance}. We compare our training method with two algorithms.  For the first algorithm (we call it \emph{Baseline}), we remove the Gumble-Softmax trick and train the model using standard optimization methods. For the second algorithm (we call it \emph{Sharpened Sigmoid}), we use a sharpened sigmoid function as described in Section~\ref{ssec:training} by setting $\tau=0.2$ and check whether such trick can bring better performance.

From the results, we can see that our learned models are competitive or better than all baseline models. In language modeling task, we outperform the baseline algorithms for 0.7/1.1 points (1.2/1.4 points without continuous cache pointer) in terms of test perplexity. For machine translation, we outperform the baselines for 0.95/2.22 and 0.54/0.79 points in terms of BLEU score for German$\rightarrow$English and English$\rightarrow$German dataset respectively. Note that the only difference between $G^2$-LSTM and the baselines is the training algorithm, while they adopt the same model structure. Thus, better results of $G^2$-LSTM demonstrate the effectiveness of our proposed training method. This shows that restricting the outputs of the gates towards binary values doesn't bring performance drop at all. On the contrary, the performances are even better. We conclude that such benefit may come from the better generalization ability.

We also list the performance of previous works in literature, which may adopt different model architectures or settings. For language modeling, we obtain better performance results compared to the previous works listed in the table. For German$\rightarrow$English translation, the two-layer stacked encoder-decoder model we learned outperforms all previous works. For English$\rightarrow$German translation, our result is worse than GNMT \citep{wu2016google} as they used a stacked eight-layer model while we only used a three-layer one.

\subsection{Sensitivity Analysis}
We conducted a set of experiments to test how sensitive our learned models were when compressing their gate parameters. We considered two ways of compression as follows.
\begin{description}[leftmargin=0cm]
\item [Low-Precision Compression] We compressed parameters in the input and forget gates to lower precision. Doing so the model can be compressed to a relatively small size. In particular, we applied round and clip operations to the parameters of the input and forget gates:
    \begin{eqnarray}
    \text{round}_r(x)&=&\text{round}(x/r)*r, \label{eqn:round}\\
    \text{clip}_c(x)&=&\text{clip}(x,-c,c). \label{eqn:clip}
    \end{eqnarray}
    We tested two settings of low-precision compression. In the first setting (named as \emph{Round}), we rounded the parameters using Eqn.~(\ref{eqn:round}). In this way, we reduced the support set of the parameters in the gates. In the second setting (named as \emph{Round \& Clip}), we further clipped the rounded value to a fixed range using Eqn.~(\ref{eqn:clip}) and thus restricted the number of different values. As the two tasks are far different, we set the round parameter $r=0.2$ and the clip parameter $c=0.4$ for the task of language modeling, and set $c=1.0$ and $r=0.5$ for neural machine translation. As a result, parameters of input gates and forget gates in language modeling can only take values from ($0.0, \pm0.2, \pm0.4$), and ($0.0, \pm0.5, \pm1.0$) for machine translation.
\item [Low-Rank Compression] We compressed parameter matrices of the input/forget gates to lower-rank matrices through singular value decomposition, which can reduce the model size and lead to faster matrix multiplication. Given that the hidden states of the task of language modeling were of much larger dimension than that of neural machine translation, we set $\mathit{rank}=64/128$ for language modeling and $\mathit{rank}=16/32$ for neural machine translation.
\end{description}
We summarize the results in Table \ref{exp:robustlm}-\ref{exp:robust2}. From Table \ref{exp:robustlm}, we can see that for language modeling both the baseline and our learned model are quite robust to low-precision compression, but our model is much more robust and significantly outperforms the baseline with low-rank approximation. Even setting $\mathit{rank}=64$ (roughly 12$\times$ compression rate of the gates), we still get 56.0 perplexity, while the perplexity of the baseline model increases from 52.8 to 65.5, i.e., becoming 24\% worse. For machine translation, our proposed method is always better than the baseline model, no matter for low-precision or low-rank compression. Even if setting $\mathit{rank}=16$ (roughly 8$\times$/32$\times$ compression rate of the gates for German$\rightarrow$English and English$\rightarrow$German respectively), we still get roughly comparable translation accuracy to the baseline model with full parameters. All results show that the models trained with our proposed method are less sensitive to parameter compression.

\begin{figure*}[htb]
\centering
\begin{minipage}{0.30\linewidth}
\subfigure[Input gates in $G^2$-LSTM]{
\includegraphics[width = 1\linewidth]{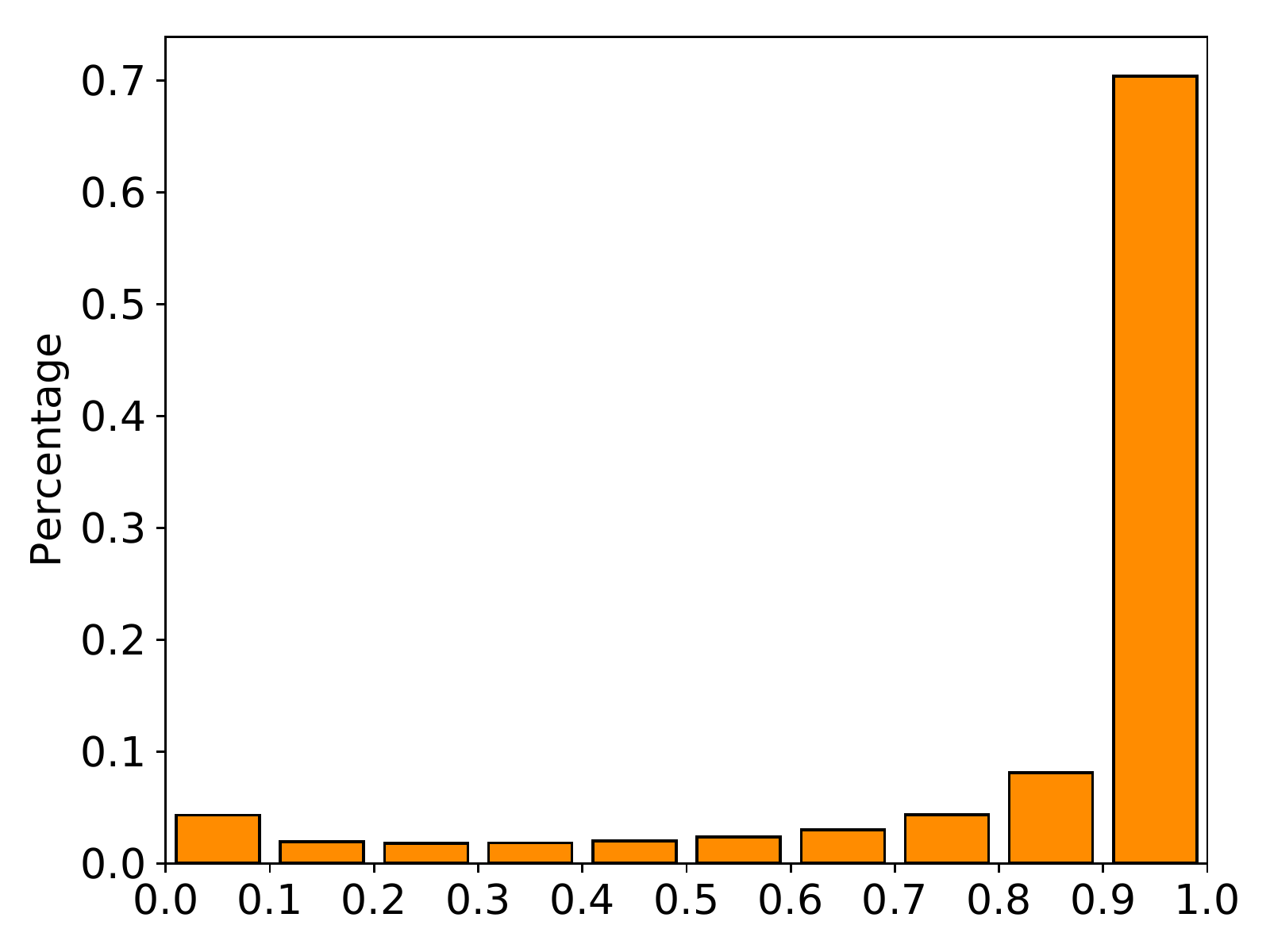}
}
\end{minipage}%
\qquad
\begin{minipage}{0.30\linewidth}
\subfigure[Forget gates in $G^2$-LSTM]{
\includegraphics[width = 1\linewidth]{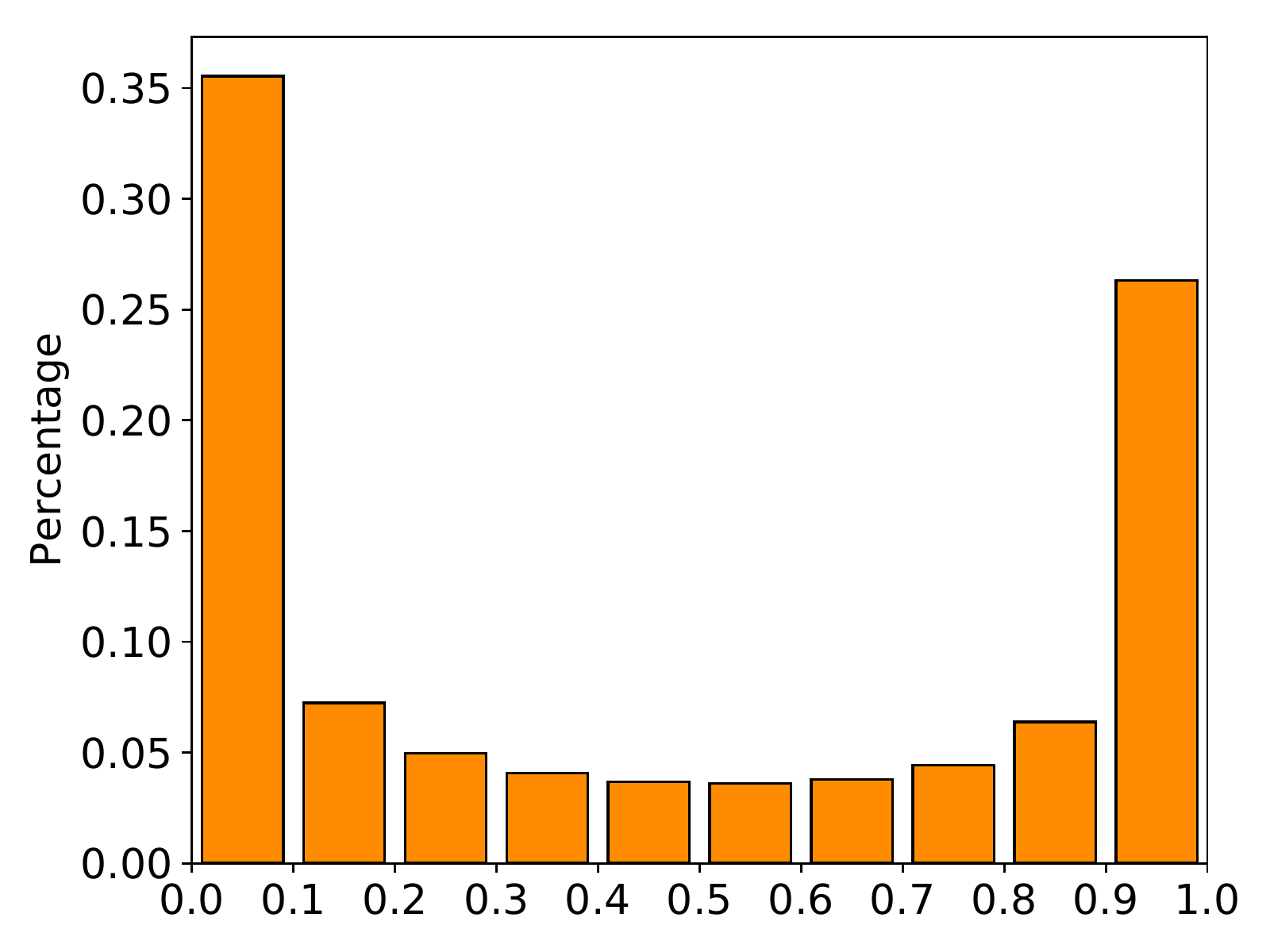}
}
\end{minipage}%
\caption{Histograms of gate value distributions in $G^2$-LSTM, from the same data as Figure~\ref{fig:gate_lstm}.}
\label{fig:gate_g2_lstm}
\end{figure*}

\begin{figure*}[tb]
    \centering
    \includegraphics[width = 0.8\linewidth]{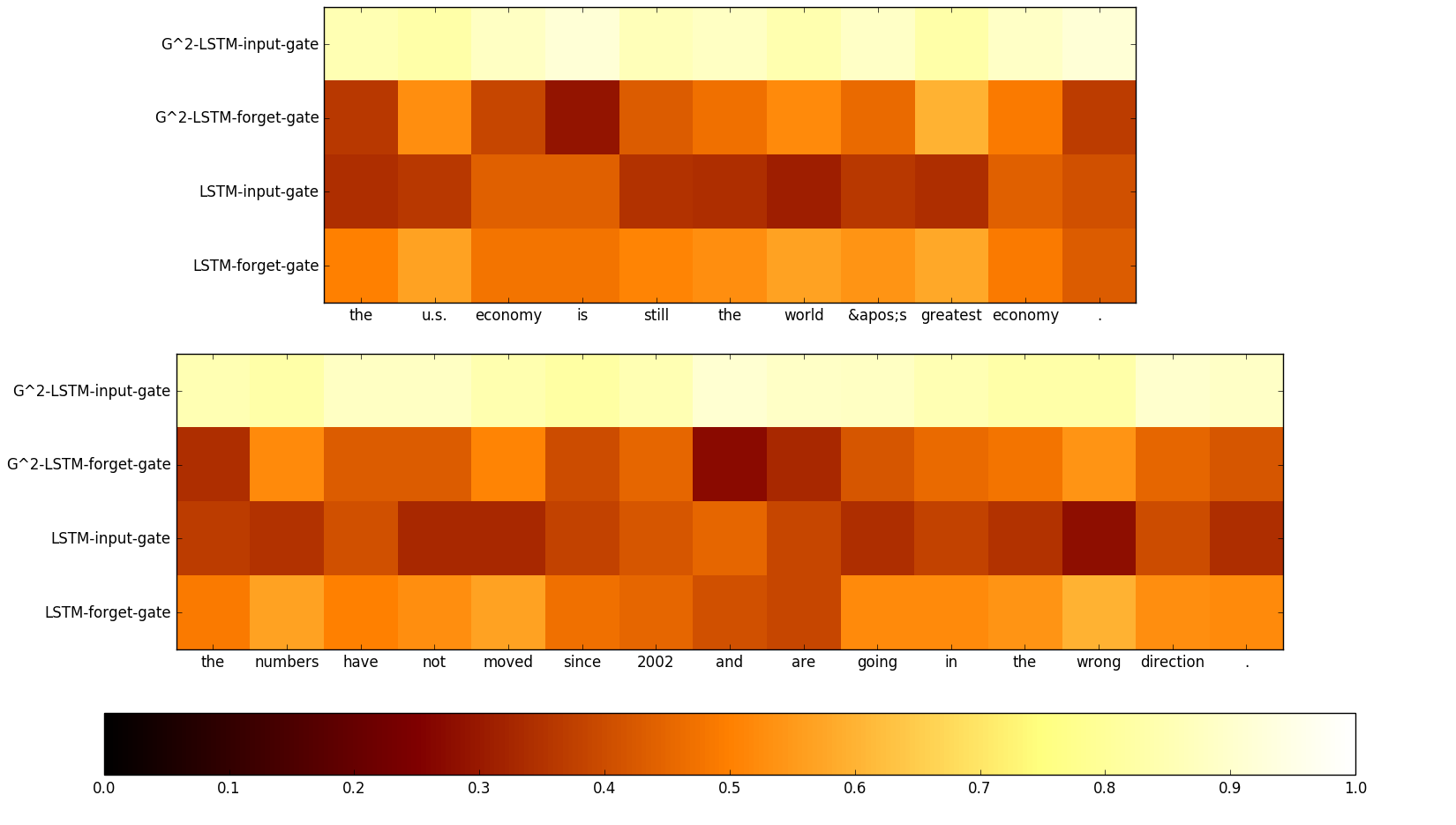}
    \caption{Visualization of average gate value at each timestep in LSTM and $G^2$-LSTM, from the same model as Figure~\ref{fig:gate_lstm}.}
    \label{fig:heatmaps}
\end{figure*}

\subsection{Visualization of the Gates}
In addition to comparing the final performances, we further looked inside the learned models and checked the gates.

To well verify the effectiveness of our proposed $G^2$-LSTM, we did a set of experiments to show the values of gates learned by $G^2$-LSTM are near the boundary and reasonable, based on the model learned from German$\rightarrow$English translation task. We show the value distribution of the gates trained using classic LSTM and $G^2$-LSTM. To achieve this, we sampled 10000 sentence pairs from the training set and fed them into the learned models. We got the output value vectors of the input/forget gates in the first layer of the decoder. We recorded the value of each element in the output vectors and plotted the distributions in Figure~\ref{fig:gate_lstm} and~\ref{fig:gate_g2_lstm}.

From the figures, we can see that although both LSTM and $G^2$-LSTM work reasonably well in practice, the output values of the gates are very different. In LSTM, the distributions of the gate values are relatively uniform and have no clear concentration. In contrast, the values of the input gates of $G^2$-LSTM are concentrated in the region close to 1, which suggests that our learned model tries to keep most information from the input words; the values of the forget gates are concentrated in the boundary regions (i.e., either the region close to 0 or the region close to 1). This observation shows that our training algorithm meets our expectation and successfully pushes the gates to 0/1.

Besides the overall distribution of gate values over a sampled set of training data, here we provide a case study for sampled sentences. We calculated the average value of the output vector of the input and forget gate functions for each word. In particular, we focused on the average value of the input/forget gate functions in the first layer and check whether the averages are reasonable. We plot the heatmap of the English sentence part in Figure \ref{fig:heatmaps}. 

First, we can see that our $G^2$-LSTM does not drop information in the input gate function since the average values are relatively large for all words. In contrast, the average values of the input gates of LSTM are sometimes small (less than 0.5), even for the meaningful word like ``wrong''. As those words are not included into LSTM, they cannot be effectively encoded and decoded, and thus lead to bad translation results. Second, for $G^2$-LSTM, most of the words with small values for forget gates are function words (e.g., conjunctions and punctuations) or the boundaries in clauses. That is, our training algorithm indeed ensures the model to forget information on the boundaries in the sentences, and reset the hidden states with new inputs.

\section{Conclusion and Future Work}
In this paper, we designed a new training algorithm for LSTM by leveraging the recently developed Gumbel-Softmax estimator. Our training algorithm can push the values of the input and forget gates to 0 or 1, leading to robust LSTM models. Experiments on language modeling and machine translation demonstrated the effectiveness of the proposed training algorithm.

We will explore following directions in the future. First, we will apply our algorithm to deeper models (e.g., 8+ layers) and test on larger datasets. Second, we have considered the tasks of language modeling and machine translation. We will study more applications such as question answering and text summarization.
\section*{Acknowledgements}
This work was partially supported by National Basic Research Program of China (973 Program) (grant no. 2015CB352502), NSFC (61573026). We would like to thank Chen Xing and Qizhe Xie for helpful discussions, and the anonymous reviewers for their valuable comments on our paper.

\bibliography{icml2018_conference}
\bibliographystyle{icml2018}
\end{document}